\newtheorem{theorem}{\hspace{0pt}\bf Theorem}
\newtheorem{coro}{\hspace{0pt}\bf Corollary}
\newtheorem{prop}{\hspace{0pt}\bf Proposition}
\newcommand\redsout{\bgroup\markoverwith{\textcolor{red}{\rule[0.5ex]{2pt}{0.4pt}}}\ULon}
\definecolor{mygreen}{rgb}{0.10,0.50,0.10}
\title{\LARGE \bf
A Bi-Level Optimization Method for Redundant Dual-Arm Minimum Time Problems 
}
\author{Jonathan Fried and Santiago Paternain
\thanks{The authors are with the Department of Electrical, Computer, and Systems Engineering, Rensselaer Polytechnic Institute. Email: \{friedj2, paters\}@rpi.edu }
}
\begin{document}

\maketitle
%\thispagestyle{empty}
%\pagestyle{empty}

%%%%%%%%%%%%%%%%%%%%%%%%%%%%%%%%%%%%%%%%%%%%%%%%%%%%%%%%%%%%%%%%%%%%%%%%%%%%%%%%
\begin{abstract}

In this work, we present a method for minimizing the time required for a redundant dual-arm robot to follow a desired relative Cartesian path at constant path speed by optimizing its joint trajectories, subject to position, velocity, and acceleration limits. The problem is reformulated as a bi-level optimization whose  lower level is a convex, closed-form subproblem that maximizes path speed for a fixed trajectory, while the upper level updates the trajectory using a single-chain kinematic formulation and the subgradient of the lower-level value. Numerical results demonstrate the effectiveness of the proposed approach.

\end{abstract}

%%%%%%%%%%%%%%%%%%%%%%%%%%%%%%%%%%%%%%%%%%%%%%%%%%%%%%%%%%%%%%%%%%%%%%%%%%%%%%%%
\section{Introduction}

In several industrial tasks performed by robots, the Cartesian path to be followed by the end-effector requires fewer degrees of freedom than what the manipulator possesses. In many applications \cite{yundou2018,mehmet2019,owen2022,wen2020,heping2008}, rotation around the attached tool center point (TCP) is considered a free axis, resulting in a kinematically redundant system \cite{conkur_1997,leger2016}. This allows adjustment of the robot pose in ways that optimize multiple metrics. % or complete multiple objectives. 
In the context of our work, we attempt to minimize the completion time under constant path speed. Inspired by applications in additive manufacturing such as spray coating, we consider constant path velocity. The latter is a proxy for material deposition, which must be uniform to ensure the quality of the process~\cite{xiong2020,coutinho2023,Li2018,DING2023683,Tan2017,SERAJ201924}.

To improve performance, manufacturers are turning to dual-arm systems to overcome single-robot limitations. \cite{jiang2022} addresses this from a control perspective, combining adaptive control with radial basis functions to handle dynamic uncertainties, but does not exploit system redundancy. \cite{lee2015} proposes an optimization approach inspired by human bimanual asymmetry, assigning fine motion and force tasks to the right arm and coarse, supportive roles to the left. \cite{chaki2024} approaches the problem via motion planning, using inverse kinematics and quadratic programming with inequality constraints on relative position to predefine allowable error and achieve locally optimal, precise motions.

To minimize time in dual-arm assembly tasks, \cite{YING2021} combines Bi-directional RRT with an LSTM network, using a greedy heuristic to enhance sampling. It focuses on trajectory generation rather than redundancy exploitation, making it unsuitable for tasks with tight error margins. \cite{kurosu2017} uses mixed integer linear programming with RRT to optimize operation time in dual-arm pick-and-place. MILP assigns object handling and order per arm, while RRT ensures collision-free paths. This speeds up planning compared to pure RRT and balances arm motions, but lacks the precision and coordination needed for tasks such as handovers, nor explores joint redundancies. \cite{DILEVA2024} addresses dual-arm manipulation as a time-optimal control problem, yet since it optimizes for both Cartesian trajectory and force, redundancy usage is limited.

We address dual-arm motion planning by minimizing path traversal time under strict accuracy constraints \cite{he2023}. Our algorithm efficiently finds locally optimal solutions by leveraging the structure of the joint motion problem, while avoiding recursive usage of inverse kinematics. We use a bi-level formulation with a convex, efficiently solvable lower-level subproblem, similar to \cite{vers2009,tang2019}. This solution can be used to find descent directions with respect to the high-level problem. Unlike \cite{tang2019}, which assumes differentiability of the resulting upper-level problem, we show that descent directions can still be efficiently computed even when that assumption does not hold. By an appropriate selection of the reference frame, we describe the relative path between arms independently of joint positions, thus simplifying the Cartesian error computation. Furthermore, the usage of the relative Jacobian~\cite{roberts2015} enables the application of techniques from \cite{friedarxiv} to solve the upper-level problem. Our curve parameterization updates the entire joint trajectory at each iteration, unlike point-wise optimizations in quadratic programming \cite{he2023,chaki2024} or control-based methods \cite{jiang2022}.

This work is organized as follows. In Section \ref{sec_problem_formulation} we formulate the redundant dual-arm minimum time constant path speed problem. Section \ref{sec_bilevel} exploits its bi-level structure and provides a closed-form solution for the lower-level problem which depends only on the joint trajectories (Theorem \ref{thm:inner_cte}). This result extends our prior work on single-arm redundancy resolution~\cite{fried2024}. This solution as well as a relative formulation between the forward and differential kinematics of the two arms (Section \ref{sec:dualarmsol}) benefit the calculation of the upper-level subdifferentials, which improve the efficiency of the algorithm. This is supported by the numerical results in Section \ref{sec_numerical}. In particular, we compare the proposed approach with a general purpose nonlinear solver in an experiment inspired by a cold spraying application. We present as well the benefits of considering the dual-arm compared to the single arm solution proposed in our previous work~\cite{friedarxiv}. This is followed by concluding remarks in Section \ref{sec_conclusion}.

% \red{
% This work is organized as follows. In Section \ref{sec_problem_formulation} we formulate a constant path speed joint motion optimization problem for a redundant dual-arm robot manipulator system. Section \ref{sec_bilevel} proposes an equivalent bi-level formulation, where the low level optimizes the path speed given the joint trajectory and the upper level is concerned with the joint optimization. Of particular interest is that the low-level subproblem is convex and has a closed-form solution (Theorem \ref{thm:inner_cte}). This result extends our prior work on single-arm redundancy resolution under constant path velocity~\cite{fried2024}. In Section \ref{sec:dualarmsol}, we present a relative formulation between the forward and differential kinematics of the two arms and show that this particular formulation benefits the task description and calculation of derivatives for the upper-level problem. We finish Section \ref{sec_form} by describing how the subgradients obtained through the lower-level problem and the relative differential kinematics can be leveraged in a primal-dual method to optimize the upper-level subproblem. Section \ref{sec_numerical} presents numerical simulations inspired by a cold spraying application and comparison against the single arm solution proposed in our previous work \cite{friedarxiv}. This is followed by concluding remarks (Section \ref{sec_conclusion}).  
% }
%%%%%%%%%%%%%%%%%%%%%%%%%%%%%%%%%%%%%%%%%%%%%%%%%%%%%%%%%%%%%%%%%%%%%%%%%%%%%%%%
\section{Dual Arm Minimum Time Problem}\label{sec_problem_formulation}

Consider the problem of optimizing the joint motion of two redundant robot manipulators. Let $n_A$ and $n_B$ be the number of joints of each robot. The task is described by the relative Cartesian path, position and orientation, between the two TCPs $\chi_d \in \mathbb{R}^m$. The robotic system (shown in Figure \ref{fig:transforms}) is redundant with respect to the task, so $n = n_A + n_B > m$. 
Denote by $\mathcal{F}_{bA}$, $\mathcal{F}_{tA}$ the Manipulator A base frame and TCP frame, respectively, and $\mathcal{F}_{bB}$, $\mathcal{F}_{tB}$ those of Manipulator B.
% \red{The system and important coordinate frames are shown in Figure \ref{fig:transforms}, where $\mathcal{F}_{bA}$, $\mathcal{F}_{tA}$ are the Manipulator A base frame and TCP frame, respectively, and $\mathcal{F}_{bB}$, $\mathcal{F}_{tB}$ are the Manipulator B base frame and TCP.} 
%\begin{figure}
 %   \centering
  %  \includegraphics[width=0.9\linewidth]{%figuresCDC/robotexample.eps}
   % \caption{Setup example consisting of two arms with 6 degrees of freedom.}
   % \label{fig:frameexample}
%\end{figure}

The $\ell$-th manipulator's forward kinematics $k_\ell:\mathbb{R}^{n_\ell}\to\mathbb{R}^m$ maps joint space to Cartesian space, i.e.,
\begin{equation}
    \label{eq:fwd_kin_i} 
    \chi_\ell= k_\ell(q_\ell), \quad \mbox{for $\ell:={A,B}$},
\end{equation}
where $\chi_\ell$ is the pose of robot $i\ell$ in the base frame $\mathcal{F}_{b\ell}$. The manipulator's differential kinematics, which map joint velocity to Cartesian velocity, are given by
% \red{Let $\dot{\chi}_\ell \in \mathbb{R}^m$ be the TCP Cartesian velocity with respect to the base $\ell$, $\dot{q}_\ell \in \mathbb{R}^{n_\ell}$ the joint velocity vector, and $J_\ell(q_\ell) \in \mathbb{R}^{m\times n_{\ell}}$ the manipulator's Jacobian. Then, the manipulator's differential kinematics\red{, which map joint velocity to Cartesian velocity,} are given by}
%
\begin{equation}
    \label{eq:dif_kin_i}
    \dot{\chi}_\ell = J_\ell(q_\ell)\dot{q}_\ell,\quad \mbox{for $\ell:={A,B}$},
\end{equation}

Finally, as we are interested in the relative Cartesian path between the two TCPs, the effective path between the TCPs of both robots can be defined as:
\begin{equation}
    \label{eq:relcartesianpath}
    (\chi)_{\mathcal{F}} = (\chi_A)_{\mathcal{F}} - (\chi_b)_{\mathcal{F}} = (k_A(q_a))_{\mathcal{F}} - (k_b(q_b))_{\mathcal{F}},
\end{equation}
where $\mathcal{F}$ is some frame of reference. In Section \ref{sec:dualarmsol} we show that there are benefits for describing the relative on the frame of reference of the tooltip of the second arm $\mathcal{F}_{tB}$.

Now, let $s_i$, where $s_0=0$ and $s_N=1$ with $i=0,\ldots,N$ denote the path length and %{\color{red}{$\chi^d: [0,1]\to \mathbb{R}^{m \times N+1}$}} 
$(\chi^d_i)_{\mathcal{F}} \in \mathbb{R}^m$ with $i=0,\ldots N$ the desired relative Cartesian path that the robots must trace. 

Since the dual arm robot system is redundant, multiple trajectories in joint space result in the same relative Cartesian path. Among these, we are interested in finding one with the minimum traversal time and constant path velocity.

The minimum time joint optimization problem  is equivalent to choosing $q$ for each manipulator that maximizes constant path velocity $\dot{s}$ or to minimize $t_f$, the travel time of the dual arm robot system across the path $\chi^d$  %Let $t_i$ with $i=0,\ldots N-1$ denote the time interval needed to go from path point $s_i$ to path point $s_{i+1}$. Then, we can write $t_f$ and its first order approximation as 
\begin{equation}
\label{eq:objective}
t_f = \frac{1}{\dot{s}}.
\end{equation}
%
%
%Note that the objective does not depend on $\dot{s}_N$, since the path is completed at point $N$. Thus, the speed at that point does not contribute to the total time. 

Each manipulator's joint $j=1,\ldots,n_\ell$ operates under position, velocity, and acceleration constraints, given by 
\begin{equation}
    \underline{q}_{\ell j} \leq q_{\ell ij} \leq \overline{q}_{\ell j},\;\dot{\underline{q}}_{\ell j} \leq \dot{q}_{\ell ij} \leq \dot{\overline{q}}_{\ell j}\;\mbox{and}\;    \ddot{\underline{q}}_{\ell j} \leq \ddot{q}_{\ell ij} \leq \ddot{\overline{q}}_{\ell j},
    \label{eq:joint_restrictions}
\end{equation}
where $q_{\ell ij}$ is the $\ell$-th manipulator's joint $j$-th position at path point $s_i$, for $\ell=\{A,B\}$, $\forall \, j =\{1,\ldots,n_\ell\}$ and $\forall \,i =\{0,\ldots,N\}$, and $\dot{\underline{q}}_{\ell j}, \ddot{\underline{q}}_{\ell j} < 0$, $\dot{\overline{q}}_{\ell j}, \ddot{\overline{q}}_{\ell j} > 0$. This assumption implies that revolute joints can rotate clockwise or counterclockwise, or move forward and backward for prismatic joints.  We approximate the joints' trajectory $q_{\ell ij}$ linearly with respect to vectors of parameters $\theta_{\ell j} \in \mathbb{R}^d$ for all $j=1,\ldots{n_\ell}$ so that 
\begin{equation}
    q_{\ell ij} = p(s_i)\theta_{\ell j},
    \label{eq:paramterization}
\end{equation}
where $p(s_i) \in \mathbb{R}^{1\times d}$ is a twice differentiable vector basis in $s$. This assumption is common in the literature (see e.g., \cite{koubiaspline,zhu2022,embry2018}). Note that each robot joint could be parameterized by a different basis, however, we use the same one for simplicity. Applying the chain rule, the joint velocities yield
\begin{equation}\label{q_dot}
\dot{q}_{\ell ij} = q_{\ell ij}^{\prime}(s_i)\dot{s}_i = p^{\prime}(s_i)\theta_{\ell j} \dot{s}_i.
\end{equation}
\begin{figure}
    \centering
    \includegraphics[width=0.9\linewidth]{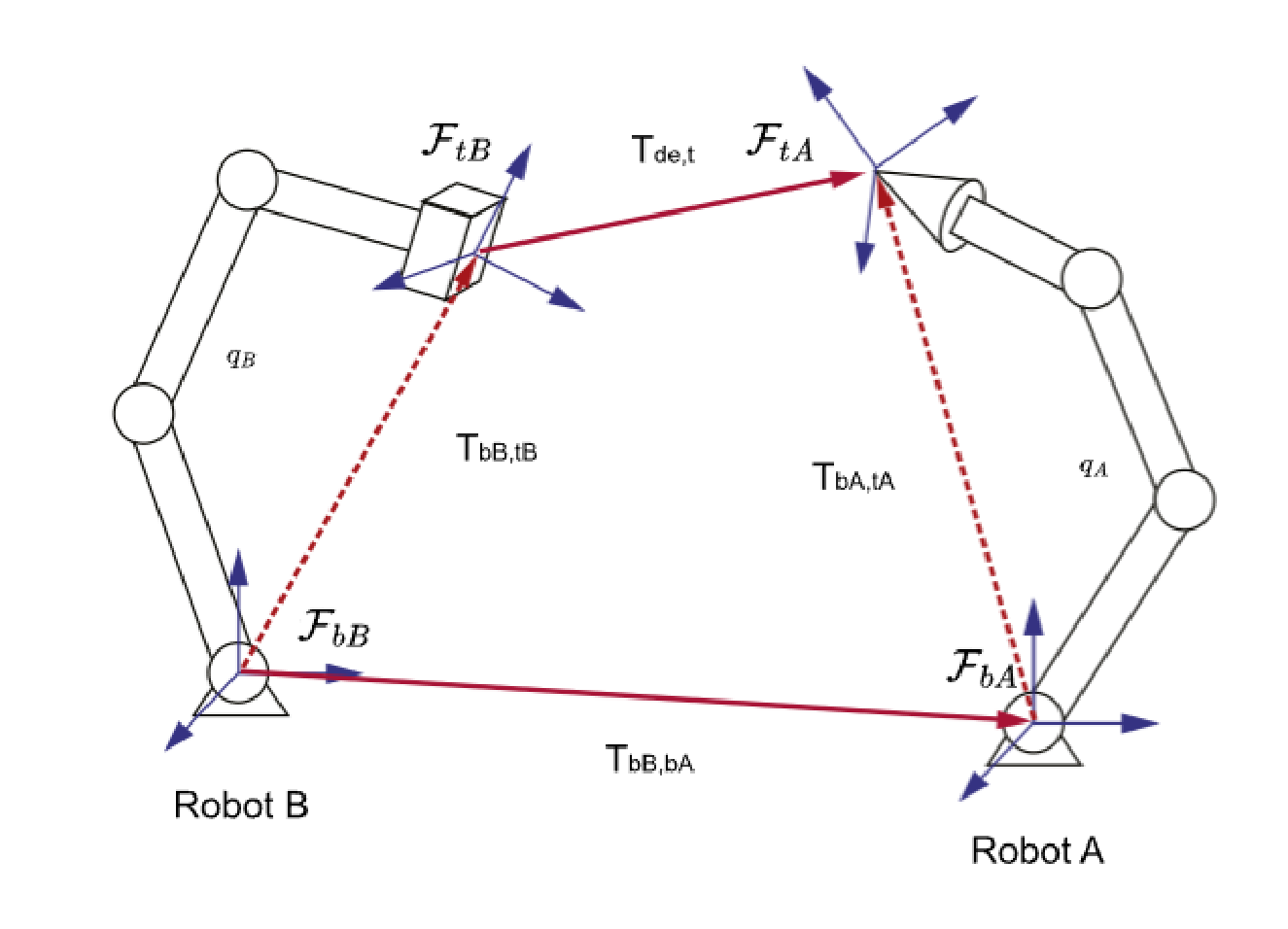}
    \caption{Setup example consisting of two arms with 6 degrees of freedom. Transformations between the frames of the dual arm robotic system \cite{roberts2015}.}
    \label{fig:transforms}
\end{figure}

Likewise, their accelerations are given by
\begin{equation}
\label{eq:sec_der}
\ddot{q}_{\ell j} = q_{\ell ij}^{\prime\prime}\dot{s}_i^2+q_{\ell ij}^{\prime}\ddot{s}_i = p^{\prime\prime}(s_i)\theta_{\ell j} \dot{s}_i^2+p^{\prime}(s_i)\theta_{\ell j}\ddot{s}_i.
\end{equation}
As we are interested in the case with constant path velocity, i.e. $\ddot{s}_i = 0$, the previous equations reduce to 
\begin{align}    
\label{eq:sec_der_2}
\dot{q}_{\ell ij} &= p^{\prime}(s_i)\theta_{\ell j} \dot{s} \\
\ddot{q}_{\ell j} &= p^{\prime\prime}(s_i)\theta_{\ell j} \dot{s}^2.
\end{align}
We define for simplicity the following notation $p_i\!\!=\!\!p(\!s_i\!), p^\prime_i\!\!=\!\!p^\prime\!(\!s_i\!), p^{\prime\prime}_i\! =\!p^{\prime\prime}\!(\!s_i\!)$, which we will use for the remainder of this work. 
Note that, since $p^\prime$ and $p^{\prime\prime}$ are not constant, the joint velocities and accelerations are not constant despite $\dot{s}$ being so. Thus, the joint limits must still be considered.

%Furthermore, we write compactly manipulators $\ell$'s joints positions at path point $i$ as $q_{\ell i}\!=\!\left(\!I_{n_\ell}\otimes p_i\!\right)\theta_\ell$, where $\otimes$ denotes the Kronecker product, $I_{n_\ell}$ is a $n_\ell\times n_\ell$ identity matrix and $\theta_\ell = \left[\theta_{\ell 1}^T,\ldots,\theta_{\ell n_\ell}^T\right]^T$. %In what follows, we omit the parenthesis since, given the dimensions of the matrices, this is the only possible order of the operations. 

Finally, define $p-$norm error between the desired Cartesian path and the manipulators relative Cartesian path as
\begin{equation}
    \label{eq:error}
    E(\theta_A,\theta_B) = \left(\sum_{i=0}^N \left\|(\chi^d_i(\theta_A,\theta_B))_{\mathcal{F}}- (\chi_i(\theta_A,\theta_B))_{\mathcal{F}}\right\|_2^p\right)^{\frac{1}{p}},\end{equation} %((k_A(\theta_A))_{\mathcal{F}}-(k_B(\theta_B))_{\mathcal{F}}) 
where we have written the Cartesian path as a function of the joint parameters with a slight abuse of notation. Note that in many industrial applications (\cite{he2023},\cite{coutinho2023}) this relative path $\chi_d$ is defined by a part attached to the TCP of one of the robots. In this case, describing this path in a general frame $\mathcal{F}$ may be non-trivial and depend on the robot pose.
%, and $\theta_A,\theta_B$ are the vectors of parameters corresponding to robots A and B respectively. 

 With this, we are now able to formalize the minimum time joint optimization problem:

\begin{mini}|s|
  {\stackrel{\dot{s}\geq 0,}{\stackrel{\theta_A \in \mathbb{R}^{n_Ad},\theta_B \in \mathbb{R}^{n_Bd}}{{}}}
  }
  { \frac{1}{\dot{s}}}{\label{opt:generaldiscrete}}{{t_f}^\star=}
%  \addConstraint{\sum_{i=0}^N}{ \frac{1}{2}\left\|\chi^d_i-k(I_n\otimes p_i\theta) \right\|_2^2\leq \epsilon}{}
   \addConstraint{}{}{E(\theta_A,\theta_B)\leq \epsilon}
 \addConstraint{}{}{\underline{q}_{\ell j} \leq p_i\theta_{\ell j}\leq \overline{q}_{\ell j}}  
  \addConstraint{}{}{\dot{\underline{q}}_{\ell j} \leq p_i^\prime\theta_{\ell j}\dot{s}\leq \dot{\overline{q}}_{\ell j}}
  \addConstraint{}{}{\ddot{\underline{q}}_{\ell j} \leq p^{\prime\prime}_i\theta_{\ell j} \dot{s}^2 \leq \ddot{\overline{q}}_{\ell j}}
 % \addConstraint{}{}{\dot{s}_{i+1} = \dot{s}_i + \ddot{s}_i t_i}
 % \addConstraint{}{}{s_{i+1} = s_i + \dot{s}_i t_i + \ddot{s}_i \frac{t_i^2}{2}}
 \end{mini}

where $E(\theta_A,\theta_B)$ is the error on the Cartesian as defined in \eqref{eq:error} and $\epsilon$ a tolerance on it. In the previous expressions we have omitted that the constraints need to hold for all $i=\left\{0,\ldots, N\right\}$, $j\! =\! \left\{1,\ldots,n\right\}$, and $\ell :=A,B$.

The kinematic and dynamic constraints involving the products of different decision variables and the nonlinear relationship between joints and Cartesian paths (which makes the function $E(\theta_A,\theta_B)$ nonconvex) make problem \eqref{opt:generaldiscrete} a non-convex optimization problem. Note as well that depending on the choice of coordinate frame one writes the Cartesian coordinates in, {the function $E(\theta_A,\theta_B)$ will have cross products between decision variables $\theta_A$ and $\theta_B$.

Instead of relying on general non-convex solvers, e.g.,~\cite{byrd2000trust,waltz2006interior,gill2019practical}, we propose to exploit the bi-level structure that arises in this problem to develop efficient algorithms. In particular, inspired in \cite{vers2009}, \cite{roberts2015} and our previous work \cite{friedarxiv},% for single manipulators, 
we obtain a structure in which \emph{given} the joint trajectories, the lower-level problem solves for $\dot{s}$, and the upper-level solves for the joint parameters $\theta_A,\theta_B$. 
This algorithm benefits from reduced running time as we establish empirically in Section \ref{sec_numerical}.

%%%%%%%%%%%%%%%%%%%%%%%%%%%%%%%%%%%%%%%%%%%%%%%%%%%%%%%%%%%%
\section{A Bi-level Formulation for the Dual Arm Minimum Time Problem}\label{sec_form}

In this section, we propose a bi-level problem with a convex low-level subproblem, whose solution is the same as that to \eqref{opt:generaldiscrete}. 

\subsection{A Bi-level Formulation with Convex Low-Level Problem}\label{sec_bilevel}

We start by defining the lower-level problem
\begin{mini}|s|
  {\dot{s}^2\geq 0}{ \frac{1}{\dot{s}^2}}{\label{opt:inner}}{V(\theta_A,\theta_B)=}
%  \addConstraint{\sum_{i=0}^N}{ \left\|\chi^d_i-k(I_n\otimes p_i\theta) \right\|_2^2\leq \epsilon}{}
 % \addConstraint{\underline{q}_j }{\leq p_i\theta_j\leq \overline{q}_{j}}{}  
%  \addConstraint{\dot{\underline{q}}_j }{\leq p_i^\prime\theta_j\dot{s}\leq \dot{\overline{q}}_{j}}{}
  \addConstraint{-\dot{\underline{q}}_{\ell j}^2}{\leq \mbox{sign}(p_i^\prime\theta_{\ell j})(p_i^\prime\theta_{\ell j})^2\dot{s}^2 \leq \dot{\overline{q}}_{\ell j}^2}{}
\addConstraint{\ddot{\underline{q}}_{\ell j} }{\leq p^{\prime\prime}_{ij}\dot{s}^2 \leq \ddot{\overline{q}}_{\ell j}}{},
\end{mini}
%
%where $\rho_{ij\ell} = \left(p^{\prime\prime}_i\theta_{\ell j} - p^{\prime}_i\theta_{\ell j}/2\Delta s_i\right)$. 
As in \eqref{opt:generaldiscrete} the constraints need to be satisfied for all $j=1,\ldots,n$, for all $i=0,\ldots,N$, and for all robots $\ell :=A,B$. Note that $V(\theta_A,\theta_B)$ is the square of the optimal traverse time for a given trajectory defined by $\theta_A$ and $\theta_B$. 

Then, let us define the upper-level problem 
\begin{mini}|s|
  {\theta \in \mathbb{R}^{nd}}{ V(\theta_A,\theta_B)}{\label{opt:outer}}{({t_f}^\star)^2=}
  \addConstraint{E(\theta_A,\theta_B)}{ \leq \epsilon}{}
  \addConstraint{\underline{q}_{\ell j} }{\leq p_i\theta_{\ell j}\leq \overline{q}_{\ell j}}{}  \end{mini}
where in the previous expression the second constraint needs to hold for $i=0,\ldots, N$, $j=1,\ldots,n$, and $\ell:=A,B$. 

The structure proposed here follows the intuition discussed in Section \ref{sec_problem_formulation}, where the upper-level problem is independent of the kinematic and dynamic variables and the lower-level problem minimizes the traversal time given a trajectory to respect the manipulators' limits.

Two properties of the lower-level \eqref{opt:inner} make it appealing in this application. The problem has a closed form solution, and the function $V(\theta_A,\theta_B)$ is convex. This is the subject of the following Theorem. 
\begin{theorem}\label{thm:inner_cte}
Consider the optimization problem \eqref{opt:inner} parameterized by $\theta_A$ and $\theta_B$. If the velocity and acceleration joint limits satisfy $\dot{\underline{q}}_{\ell j}, \ddot{\underline{q}}_{\ell j} < 0$, $\dot{\overline{q}}_{\ell j}, \ddot{\overline{q}}_{\ell j} > 0$, for $\ell=A, B$ and $j=1,\ldots, n_\ell$, then $V(\theta_A,\theta_B)$, the value of the optimization problem \eqref{opt:inner}, is given by

\begin{dmath}\label{eq_solution_inner_cte}
    V(\theta_A,\theta_B) \!\!=\!\!\!\!\!\! \max_{\substack{\ell \in\left\{A,B\right\} \\
    j \in\{1,\dots,n_\ell\} \\ i\in\{0,\ldots,N\}}} \!\!\!\left\{\!\!\left(\!\!\max\left\{\frac{p'_i\theta_{\ell j}}{\dot{\overline{q}}_{\ell j}},\frac{p^{\prime}_i\theta_{\ell j}}{\dot{\underline{q}}_{\ell j}}\right\}\!\!\right)^2\!\!\!\!\!,  \max\!\!\left\{\frac{p''_i\theta_{\ell j}}{\ddot{\overline{q}}_{\ell j}},\frac{p^{\prime\prime}_i\theta_{\ell j}}{\ddot{\underline{q}}_{\ell j}}\right\}\!\!\!\right\}.
\end{dmath}

Furthermore, $V(\theta_A,\theta_B)$ is convex and $(\dot{s}^2)^\star=1/V(\theta_A,\theta_B)$.

\end{theorem}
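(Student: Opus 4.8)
The plan is to exploit the fact that, once $\theta_A$ and $\theta_B$ are fixed, problem \eqref{opt:inner} is a scalar program in the single variable $\dot{s}^2 \ge 0$, so I only need to characterize its feasible interval and then minimize the monotone objective $1/\dot{s}^2$ over it. First I would treat each constraint separately for a fixed triple $(\ell,j,i)$. Writing $v = p'_i\theta_{\ell j}$ and $a = p''_i\theta_{\ell j}$, each two-sided constraint reduces to a one-sided bound on $\dot{s}^2$ because the hypothesis $\dot{\underline{q}}_{\ell j},\ddot{\underline{q}}_{\ell j}<0<\dot{\overline{q}}_{\ell j},\ddot{\overline{q}}_{\ell j}$ forces exactly one of the two inequalities to be automatically satisfied. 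Concretely, a short case split on the sign of $v$ shows that the velocity constraint is equivalent to $\dot{s}^2 \le \dot{\overline{q}}_{\ell j}^2/v^2$ when $v>0$ and to $\dot{s}^2 \le \dot{\underline{q}}_{\ell j}^2/v^2$ when $v<0$ (and imposes nothing when $v=0$); likewise the acceleration constraint gives $\dot{s}^2 \le \ddot{\overline{q}}_{\ell j}/a$ when $a>0$ and $\dot{s}^2 \le \ddot{\underline{q}}_{\ell j}/a$ when $a<0$. Along the way I would verify that the $\mathrm{sign}(\cdot)$ rewriting used in \eqref{opt:inner} indeed encodes the original velocity box constraint of \eqref{opt:generaldiscrete}.

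Each constraint therefore contributes an upper bound $\dot{s}^2 \le B_{\ell j i}\ge 0$, so the feasible set is the interval $0 \le \dot{s}^2 \le \min_{\ell,j,i} B_{\ell j i}$, which is nonempty since it contains $0$. Because $1/\dot{s}^2$ is strictly decreasing in $\dot{s}^2$, the minimizer is attained at the upper endpoint, giving $(\dot{s}^2)^\star = \min_{\ell,j,i} B_{\ell j i}$ and hence $V(\theta_A,\theta_B) = 1/(\dot{s}^2)^\star = \max_{\ell,j,i} 1/B_{\ell j i}$. It then remains to identify each reciprocal with the matching entry of \eqref{eq_solution_inner_cte}: the velocity bound yields $(v/\dot{\overline{q}}_{\ell j})^2$ for $v>0$ and $(v/\dot{\underline{q}}_{\ell j})^2$ for $v<0$, which is exactly $(\max\{v/\dot{\overline{q}}_{\ell j},\,v/\dot{\underline{q}}_{\ell j}\})^2$ since the inner max selects the positive branch in each case, while the acceleration bound yields $\max\{a/\ddot{\overline{q}}_{\ell j},\,a/\ddot{\underline{q}}_{\ell j}\}$ by the same argument. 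The degenerate case in which all $v$ and $a$ vanish (so $\dot{s}^2$ is unbounded and $V=0$) is consistent, as every term in the max is then $0$.

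For convexity I would argue that $V$ is a finite pointwise maximum of convex functions of $(\theta_A,\theta_B)$. The acceleration entries $\max\{a/\ddot{\overline{q}}_{\ell j},\,a/\ddot{\underline{q}}_{\ell j}\}$ are maxima of affine functions of $\theta$, hence convex. For the velocity entries, $g_{\ell j i}(\theta) := \max\{v/\dot{\overline{q}}_{\ell j},\,v/\dot{\underline{q}}_{\ell j}\}$ is likewise a maximum of affine functions and is moreover nonnegative by the case analysis above, so its square is the composition of the convex nondecreasing map $t\mapsto t^2$ on $[0,\infty)$ with the convex nonnegative $g_{\ell j i}$, which is convex. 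Since a maximum of convex functions is convex, $V$ is convex, and $(\dot{s}^2)^\star = 1/V$ follows directly from the endpoint computation.

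I expect the main obstacle to be the bookkeeping in the sign/case analysis, in particular confirming that the $\mathrm{sign}(\cdot)$ reformulation of the velocity constraint is faithful to the original box constraint and that the automatically inactive side of each two-sided constraint is correctly identified under the sign hypotheses, together with the clean handling of the zero-coefficient degeneracies. The convexity step is then routine, with the one essential use of the sign hypotheses being the nonnegativity of the velocity max, without which squaring would not preserve convexity.
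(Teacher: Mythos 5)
Your proof is correct and complete: the reduction to a scalar program in $\dot{s}^2$, the sign-case analysis showing exactly one side of each box constraint binds (and that the $\mathrm{sign}(\cdot)$ reformulation faithfully encodes the original velocity constraint), the identification of $V$ as the max of reciprocal bounds, and the convexity argument via squaring the nonnegative max of affine functions are all sound, including the degenerate all-zero case. The paper offers no in-line argument—it only states the proof is analogous to the single-arm case in \cite{friedarxiv}—and your scalar feasible-interval argument is precisely that standard approach carried out for both arms, so you have in effect supplied the proof the paper omits.
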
    
\begin{proof}
    Proof is analogous to the single arm in \cite{friedarxiv}.
\end{proof}

From the expression for $V(\theta_A,\theta_B)$ given in the Theorem and the definition of the upper-level problem in \eqref{opt:outer} it follows that to reduce the time of completion one needs to minimize the function $V(\theta_A,\theta_B)$. This can be done by following subgradient descent algorithms on the function since it is convex. Yet, we are required to maintain the error defined in \eqref{eq:error} below a given threshold $\epsilon$ as described by \eqref{opt:outer}. This is the subject of the following section.

\subsection{Coupled Manipulator Single Chain Kinematics}
\label{sec:dualarmsol}

As we are interested in the relative path between the two robots, we model both arms as a single kinematic chain, which, as discussed in Section \ref{sec_problem_formulation}, simplifies the computation of the path error~\eqref{eq:error} and its gradient. Without loss of generality, in this work we consider that robot B is holding the part that defines the desired relative path $\chi_d$ (see Figure \ref{fig:transforms}). The chain begins at Robot B’s TCP frame $\mathcal{F}_{tB}$, passes through its base $\mathcal{F}_{bB}$, then Robot A’s base $\mathcal{F}_{bA}$, and ends at Robot A’s TCP $\mathcal{F}_{tA}$. Let $\mathcal{F}_{tB}, \mathcal{F}_{bB}, \mathcal{F}_{bA}, \mathcal{F}_{tA}$ be defined as in Section~\ref{sec_problem_formulation}, and $T_{bB,bA}$ the constant transformation from $\mathcal{F}_{bB}$ to $\mathcal{F}_{bA}$. The pose of $\mathcal{F}_{tA}$ relative to $\mathcal{F}_{tB}$ is then:

\begin{equation}
    T_{tB,tA} = T^{-1}_{bB,tB}T_{bB,bA}T_{bA,tA},
    \label{eq:reltransform}
\end{equation}
where $T_{bB,tB},T_{bA,tA}$ can be obtained from the forward kinematics \eqref{eq:fwd_kin_i} of robots B and A, respectively. 

With position and orientation obtained from $T_{tB,tA}$, then the relative forward kinematics is equal to:
\begin{equation}
    \chi = k(q), \quad\quad \dot{\chi} = J(q)\dot{q},
    \label{eq:fwd_kin_rel}
\end{equation}
where $\chi \in \mathbb{R}^m$ is the cartesian pose of the TCP of A with respect to the TCP of B, and $q^T = \left[q_B^T,\;q_A^T\right] \in \mathbb{R}^n$ is the coupled joint vector. 

With these definitions, we are in conditions of providing the expression for the error exploiting the single kinematic chain defined in \eqref{eq:fwd_kin_rel}.

\begin{prop}

Let $\chi^d$ be the desired cartesian curve that the relative motion of the robots should follow. The error $E(\theta_A,\theta_B)$ in \eqref{eq:error} can be written as 
    \begin{equation}
    \label{eq:relativeerror}
          E(\theta_A,\theta_B) = \left(\sum_{i=0}^N \left\|(\chi^d_i)_{\mathcal{F}_{tB}}- (k(\left(\!I_{n}\otimes p_i\!\right)\theta)_{\mathcal{F}_{tB}}\right\|_2^p\right)^{\frac{1}{p}},
    \end{equation}
    where $q_{i}\!=\!\left(\!I_{n}\otimes p_i\!\right)\theta$, $\otimes$ denotes the Kronecker product, $I_{n}$ is a $n \times n$ identity matrix and $\theta = \left[\theta_{B}^T,\theta_{A}^T\right]^T$.
\end{prop}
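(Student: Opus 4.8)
The plan is to prove the claim in three conceptual moves: fixing the reference frame, identifying the actual relative pose with the single-chain forward kinematics, and collapsing the per-joint parameterization into a single Kronecker-structured map. I would begin by choosing the frame of reference in \eqref{eq:error} to be the TCP frame of robot B, $\mathcal{F} = \mathcal{F}_{tB}$. The key observation is that robot B rigidly holds the part that defines the desired relative curve $\chi^d$, so when $\chi^d_i$ is expressed in $\mathcal{F}_{tB}$ it is a fixed curve that does not depend on the joint parameters. Concretely, $(\chi^d_i(\theta_A,\theta_B))_{\mathcal{F}_{tB}} = (\chi^d_i)_{\mathcal{F}_{tB}}$, which removes the first term's dependence on $\theta$ and matches the leading term of \eqref{eq:relativeerror}.

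Next I would identify the actual relative pose with the single-chain forward kinematics. By construction \eqref{eq:reltransform}, the homogeneous transform $T_{tB,tA}$ gives the pose of $\mathcal{F}_{tA}$ relative to $\mathcal{F}_{tB}$, built from the forward kinematics \eqref{eq:fwd_kin_i} of each arm and the constant base-to-base transform $T_{bB,bA}$. Extracting position and orientation from $T_{tB,tA}$ yields exactly the relative forward kinematics $\chi = k(q)$ of \eqref{eq:fwd_kin_rel}, with $q^T = [q_B^T,\,q_A^T]$ the coupled joint vector. Hence $(\chi_i(\theta_A,\theta_B))_{\mathcal{F}_{tB}} = k(q_i)$, where $q_i$ is the stacked joint vector evaluated at the path point $s_i$.

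It then remains to express $q_i$ in terms of $\theta$. From the parameterization \eqref{eq:paramterization}, each joint coordinate satisfies $q_{\ell i j} = p_i \theta_{\ell j}$ with $p_i \in \mathbb{R}^{1\times d}$ and $\theta_{\ell j} \in \mathbb{R}^d$. Stacking the $n = n_A + n_B$ coordinates in the chain order $q^T = [q_B^T,\,q_A^T]$, and ordering the parameter vector consistently as $\theta = [\theta_B^T,\,\theta_A^T]^T \in \mathbb{R}^{nd}$, each coordinate is the same linear map $p_i$ applied to its own block of $\theta$. This is precisely the block-diagonal action $q_i = (I_n \otimes p_i)\theta$. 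Substituting this identity together with the two previous identifications into \eqref{eq:error} with $\mathcal{F} = \mathcal{F}_{tB}$ reproduces \eqref{eq:relativeerror} term by term.

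The main obstacle is conceptual rather than computational: it lies in rigorously justifying that the desired curve becomes parameter-independent in $\mathcal{F}_{tB}$. This requires making explicit the modeling assumption that the part defining $\chi^d$ is rigidly fixed to robot B's tooltip, so that its pose in $\mathcal{F}_{tB}$ is invariant to the robot configuration; only under this assumption does the first term in \eqref{eq:relativeerror} lose its $\theta$ dependence. The identification of $T_{tB,tA}$ with $k(q)$ and the Kronecker stacking are then routine bookkeeping once the frame has been chosen correctly.
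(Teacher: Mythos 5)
Your proof is correct and follows essentially the same route as the paper's: it fixes $\mathcal{F}=\mathcal{F}_{tB}$ so that the rigidly held part makes $(\chi^d_i)_{\mathcal{F}_{tB}}$ independent of the joint parameters, identifies the actual relative pose with the single-chain kinematics $k(q)$ via $T_{tB,tA}$ (the paper phrases this as $(\chi_B)_{\mathcal{F}_{tB}}=0$ in the difference \eqref{eq:relcartesianpath}, which is the same observation), and substitutes the parameterization into \eqref{eq:error}. Your only addition is spelling out the Kronecker-stacking step $q_i=(I_n\otimes p_i)\theta$, which the paper leaves implicit.
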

\begin{proof}
    Consider any point in $\chi^d_i$ in the desired relative path $\chi_d$. By definition, as robot B is holding the part that represents $\chi_d$, the transformation $T_{q_{n_B},\chi^d_i}$ is a transformation between the last joint of the manipulator and the tool frame. This transformation does not depend on any joints of the manipulator and is always constant. It is possible to determine this transformation for any point $\chi^d_i$, thus $(\chi_d)_{\mathcal{F}_{tB}}$ is agnostic to the joint positions of robots A and B.

    Consider the relative Cartesian path given in \eqref{eq:relcartesianpath}, in the reference frame of TCP B $\mathcal{F}_{tB}$. 
     \begin{equation}
    %\label{eq:relcartesianpath}
    (\chi)_{\mathcal{F}_{tB}} = (\chi_A)_{\mathcal{F}_{tB}} - (\chi_B)_{\mathcal{F}_{tB}}.
\end{equation}
It follows that $(\chi_B)_{\mathcal{F}_{tB}} = 0$ as it describes the movement of the frame itself. Furthermore, $(\chi_A)_{\mathcal{F}_{tB}}$ can be obtained from equations \eqref{eq:reltransform} and \eqref{eq:fwd_kin_rel}, so that
     \begin{equation}
    %\label{eq:relcartesianpath}
    (\chi)_{\mathcal{F}_{tB}} = k(q) = k(\left(\!I_{n}\otimes p_i\!\right)\theta)
\end{equation}
with the frame $()_{\mathcal{F}_{tB}}$ omitted for the remainder of the paper for brevity. This concludes the proof.   
\end{proof}

In practice, the curve $\chi_d $ in the frame of the tooltip can be defined by the desired curve in the single arm case (e.g., a CAD file that describes the part) shifted and rotated to the tooltip of robot B that will now grasp it, with a certain pick-up point and rotation - naturally defining the homogeneous transformations $T_{tb,\chi_i^d}$.

By choosing the frame $\mathcal{F}_{tB}$ as the frame of reference, the desired cartesian path $\chi_d$ is independent of the robot joint positions $q_A$ and $q_B$. This choice simplifies the Cartesian error constraint \eqref{eq:error}. Additionally, it reduces the number of computations per iteration, as we do not need to recalculate the desired cartesian path when robot poses change.

Next, to be able to compute the gradient of the Cartesian error, we will need an explicit expression of the Jacobian matrix. This is the subject of the next proposition.

\begin{prop}\label{prop_jacobian}
Consider the relative Cartesian coordinates of the single-loop kinematic chain given in \eqref{eq:fwd_kin_rel}. Then, the relative Jacobian $J(q)$ can be calculated as 
\begin{equation}
    J(q) = [-\psi_{tB,tA}\Omega_{tB,bB}J_B\quad \Omega_{tB,bA}J_A],
    \label{eq:reljacob}
\end{equation}
where
\begin{equation}
    \psi_{tB,tA} = \left[\begin{array}{cc}
        I & S(x_{tB,tA}) \\
         0 & I
    \end{array}\right], \qquad \Omega_{i,j} = \left[\begin{array}{cc}
       R_{i,j}  & 0 \\
       0  & R_{i,j}
    \end{array}\right],
\end{equation}
$S()$ is the skew-symmetric operator, $x_{i,j} \in \mathbb{R}^3$ and $R_{i,j} \in \mbox{SO3}$ are the distance and rotation between frames of reference $i$ and $j$, respectively, which in turn depend on the joint positions of the dual arm robot system. 
\end{prop}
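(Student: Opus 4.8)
The plan is to derive the relative Jacobian by differentiating the relative forward kinematics \eqref{eq:fwd_kin_rel} and expressing the result in terms of the individual arm Jacobians $J_A$ and $J_B$. The physical picture is that the relative twist $\dot{\chi}$ of frame $\mathcal{F}_{tA}$ with respect to $\mathcal{F}_{tB}$, expressed in $\mathcal{F}_{tB}$, equals the twist of $\mathcal{F}_{tA}$ minus the twist of $\mathcal{F}_{tB}$, both transported into the common reference frame $\mathcal{F}_{tB}$. Since the two arms have disjoint joint sets, the contribution of $q_A$ comes entirely through the motion of $\mathcal{F}_{tA}$ and the contribution of $q_B$ entirely through the motion of $\mathcal{F}_{tB}$; the block structure of \eqref{eq:reljacob} then follows from collecting these two contributions.

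First I would set up twist transformation rules between frames. For a rigid transformation between frames $i$ and $j$ with rotation $R_{i,j}$ and translation $x_{i,j}$, a twist (stacked linear and angular velocity) expressed in frame $j$ is mapped to frame $i$ by the adjoint-type operator $\psi_{i,j}\,\Omega_{i,j}$, where $\Omega_{i,j}$ rotates both the linear and angular parts by $R_{i,j}$ and $\psi_{i,j}$ accounts for the lever-arm coupling via the skew-symmetric term $S(x_{i,j})$. I would verify that $\Omega_{tB,bA}J_A\dot{q}_A$ is precisely the twist of $\mathcal{F}_{tA}$, referred through the base chain into $\mathcal{F}_{tB}$, which gives the second block $\Omega_{tB,bA}J_A$ acting on $\dot{q}_A$. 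The transport through the constant transform $T_{bB,bA}$ and through $T^{-1}_{bB,tB}$ in \eqref{eq:reltransform} only contributes fixed rotations, which are absorbed into the definition of $R_{tB,bA}$, so no extra Jacobian terms appear from these constant links.

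Next I would handle the $q_B$ contribution. Because $\mathcal{F}_{tB}$ is itself moving, the relative velocity picks up a term $-J_B\dot{q}_B$ transported into $\mathcal{F}_{tB}$, but now the linear part must be corrected for the offset $x_{tB,tA}$ between the two tool frames: a pure angular velocity of $\mathcal{F}_{tB}$ induces a linear velocity at $\mathcal{F}_{tA}$ proportional to $S(x_{tB,tA})$ times that angular velocity. This is exactly the role of $\psi_{tB,tA}$, and the leading minus sign reflects that this is the velocity of the reference frame being subtracted. Collecting the first block gives $-\psi_{tB,tA}\Omega_{tB,bB}J_B$ acting on $\dot{q}_B$, and stacking against the ordering $q^T=[q_B^T,\,q_A^T]$ from \eqref{eq:fwd_kin_rel} yields the claimed block form \eqref{eq:reljacob}.

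The main obstacle is the careful bookkeeping of the coupling term $S(x_{tB,tA})$: one must correctly identify which frame's angular velocity generates the lever-arm linear velocity and in which frame the offset vector is expressed, so that $\psi_{tB,tA}$ premultiplies only the robot-B block and not the robot-A block. I would resolve this by deriving the twist of $\mathcal{F}_{tA}$ relative to $\mathcal{F}_{tB}$ from first principles — writing the absolute twists of both tool frames, subtracting, and expressing everything in $\mathcal{F}_{tB}$ coordinates — rather than by pattern-matching; this makes it transparent that the offset coupling attaches to the subtracted (reference) frame's motion and hence to the $J_B$ block alone. This construction mirrors the relative-Jacobian formulation of \cite{roberts2015}, so I would cite that work for the transport conventions and focus the written proof on verifying the two block contributions.
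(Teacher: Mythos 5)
Your derivation is correct and takes essentially the same route as the paper, whose entire proof is the citation ``See \cite{roberts2015}'': your first-principles argument---writing the absolute twists of both tool frames, subtracting, transporting into $\mathcal{F}_{tB}$, and attaching the lever-arm term $S(x_{tB,tA})$ to the subtracted reference-frame (robot-B) block---is precisely the derivation contained in that reference. If anything your sketch supplies more detail than the paper does, modulo the sign conventions for $S(\cdot)$ and the direction of $x_{tB,tA}$, which you correctly identify as the bookkeeping to pin down.
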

\begin{proof}
    See \cite{roberts2015}. 
\end{proof}
With this formulation it is possible to individually check the contribution of each manipulator to the Cartesian velocity. Notably, this form of the Jacobian holds due to the particular choice of frame of reference. This expression is key to determining directions in which the joint parameters $\theta_A,\theta_B$ can be updated without increasing the error. Additionally, the Jacobian of each individual manipulator is generally provided by its manufacturer, facilitating obtaining its expression.

The next result provides the expression for the gradient of $E(\theta_A,\theta_B)$ in \eqref{eq:relativeerror}.

\begin{coro}\label{coro_gradient} 
Let $E(\theta_A,\theta_B)$ be the error for the single chain forward kinematics defined in \eqref{eq:relativeerror}. Then,  
\begin{equation}\label{eqn_error}
\nabla_\theta E(\theta) = %\nabla_k E(k(\theta))  \nabla_\theta k(I_n\otimes p_i (\theta)) = 
\sum_{i=0}^N \left(\frac{\partial E(\theta)}{\partial \chi_i} \right)  J(q_i)(I_n\otimes p_i),
\end{equation}
where $J(q_i)$ denotes the Jacobian of the single chain forward kinematics \eqref{eq:reljacob} for the joint at the $i$-th point of the path $q_i=I_n\otimes p_i\theta$ and $\partial E/\partial \chi_i$ denotes the derivative with respect to the position and orientation of manipulator A's tooltip at the $i$-th point in the path.
\end{coro}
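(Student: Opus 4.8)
The plan is to recognize that $E$ depends on $\theta$ only through the relative Cartesian poses $\chi_i = k(q_i)$, which in turn depend on $\theta$ through the \emph{linear} parameterization $q_i = (I_n\otimes p_i)\theta$, and then to apply the chain rule to this three-fold composition. Concretely, I would introduce $g(z_0,\ldots,z_N) = \left(\sum_{i=0}^N \|\chi^d_i - z_i\|_2^p\right)^{1/p}$ so that $E(\theta) = g(\chi_0(\theta),\ldots,\chi_N(\theta))$, with $\chi_i(\theta) = k\big((I_n\otimes p_i)\theta\big)$ the relative forward kinematics of the single chain from \eqref{eq:fwd_kin_rel} and the parameterization taken from \eqref{eq:paramterization}. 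The corollary is then precisely the statement that the three Jacobians of this composition multiply and that the contributions of the $N+1$ path points add.

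First I would identify each factor. The innermost derivative is $\partial q_i/\partial\theta = I_n\otimes p_i$, which is immediate and constant because $q_i$ is linear in $\theta$ by construction. The middle factor is $\partial \chi_i/\partial q_i = J(q_i)$: this is exactly the relative differential kinematics $\dot\chi = J(q)\dot q$ of \eqref{eq:fwd_kin_rel}, so $J(q_i)$ is the Jacobian of $k$ evaluated at the $i$-th joint configuration, with closed form supplied by Proposition~\ref{prop_jacobian}. The outermost factor $\partial E/\partial\chi_i$ is the row vector obtained by differentiating the outer $p$-norm; explicitly I would record
\[
\frac{\partial E}{\partial \chi_i} = \left(\sum_{k=0}^N \|\chi^d_k-\chi_k\|_2^p\right)^{\frac{1}{p}-1}\,\|\chi^d_i-\chi_i\|_2^{\,p-2}\,(\chi_i-\chi^d_i)^T,
\]
so that assembling the three factors and summing over $i=0,\ldots,N$ yields \eqref{eqn_error}. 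A quick dimension check, $(1\times m)(m\times n)(n\times nd) = 1\times nd$, confirms the right-hand side is a legitimate gradient of $E:\mathbb{R}^{nd}\to\mathbb{R}$.

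The one point requiring care — and the only real obstacle — is differentiability of the outer $p$-norm. The chain rule presumes $E$ is differentiable at $\theta$, which fails wherever a per-point residual $\chi^d_i-\chi_i$ vanishes (and, for $p=1$ or as $p\to\infty$, on lower-dimensional sets); note that the exponent $p-2$ above already signals this when $p<2$. I would therefore state the result for the generic case $\chi^d_i\neq\chi_i$ for all $i$, where each $\|\cdot\|_2$ and the outer power are smooth and the displayed $\partial E/\partial\chi_i$ is well defined, and remark that at the isolated configurations where a residual vanishes one replaces the gradient by a subgradient, consistent with the subgradient-based treatment of the upper level. Everything else is routine bookkeeping.
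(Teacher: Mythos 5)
Your proposal is correct and follows essentially the same route as the paper, whose proof is simply ``apply the chain rule and Proposition~\ref{prop_jacobian}'': you decompose $E$ as the outer $p$-norm composed with the relative kinematics $k$ and the linear parameterization $q_i=(I_n\otimes p_i)\theta$, and multiply the three Jacobians. Your added explicit formula for $\partial E/\partial\chi_i$, the dimension check, and the caveat about non-differentiability at vanishing residuals (handled via subgradients, consistent with the paper's subgradient-based upper level) are all sound refinements of the same argument rather than a different approach.
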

\begin{proof}
    The proof follows from applying the chain rule and Proposition \ref{prop_jacobian}.
\end{proof}
Considering that the manipulator is redundant, the Jacobian $J(q)$ must have a nullspace~\cite[Ch. 10]{siciliano2007}, thus if $(I_n\otimes p_i)\Delta\theta$ is in the nullspace of $J(q_i)$ it is feasible to change the pose of the manipulator without changing the path followed by the TCP. With this intuition, Corollary \ref{coro_gradient} exploits the redundancy of the dual-arm to reduce the traversal time without increasing the Cartesian error.

With this formulation, Problem \eqref{opt:outer} can be solved in similar form to \cite{friedarxiv}. In particular, we consider a primal-dual type of method (\cite[Ch. 11]{boyd2004convex}), to find a local minima, due to the simplicity of its implementation. Let $\lambda, \mu,\nu $ be in the positive orthants in $\mathbb{R},\;\mathbb{R}^{n(N+1)},\;\mbox{and}\;\mathbb{R}^{n(N+1)}$, respectively.  Define the Lagrangian for the outer subproblem \eqref{opt:outer} as
\begin{multline}   
    \label{eq:Lagrangian}
    \mathcal{L}(\theta,\lambda,\mu,\nu) = V(\theta) + \lambda \left(E(\theta) - \epsilon\right) \\ + \sum_{i=0}^N\sum_{j=1}^n \mu_{ij}(p_i\theta_j - \overline{q}_j)  -  \sum_{i=0}^N\sum_{j=1}^n \nu_{ij}(p_i\theta_j - \underline{q}_j).
\end{multline}

Then, $\theta$ is updated following subgradient descent with step-size $\eta_{\theta}$ and the dual variables are updated with gradient descent, with step-size $\eta_{d}$.

In the next section, we show numerical results that illustrate the efficiency of this dual arm formulation, and show the benefits it obtains when compared to the single arm formulation previously presented in \cite{fried2024}.

\section{Numerical Simulations}\label{sec_numerical}

This section presents simulation results to illustrate the benefits of the proposed approach to optimizing joint trajectory for dual redundant manipulators under constant path speed. In particular, we establish the benefits\textemdash in terms of traversal time, Cartesian error, and algorithm running time \textemdash compared to the single-arm case we presented in \cite{fried2024} and to CasADi \cite{Andersson2018} as a choice of general non-linear solver.

The simulation models a material deposition task using two planar 3R manipulators (see Figure \ref{fig:setup}), where one of the robots holds the spray, while the other the workpiece. The curve $\chi^d$ to be followed represents the leading edge of a 2D fan blade mock-up, discretized into $N+1 = 500$ uniformly spaced points. A Cartesian error tolerance of $\pm5$ mm is set per industry standards.

The length of each manipulator's links are $a_1 = 2m$, $a_2 = 1.5m$, $a_3 = 1m$ and their joint limits are given by $\dot{\overline{q}} = -\dot{\underline{q}} =[1.75\;1.57\;1]\;\;rad/s$ and $\ddot{\overline{q}} = -\ddot{\underline{q}} = [35\;31.4\;20]\;\;rad/s^2$.
 These values are based on the FANUC M-1000iA industrial robot to handle the reach and size of the fanblade.\footnote{The typical industrial robot of this size would usually have at least 6 degrees of freedom we have reduced it to a 3R manipulator for the purposes of this numerical demonstration.}. The simulation setup can be seen in Figure \ref{fig:setup}.

 \begin{figure}
     \centering
     \includegraphics[width=9cm]{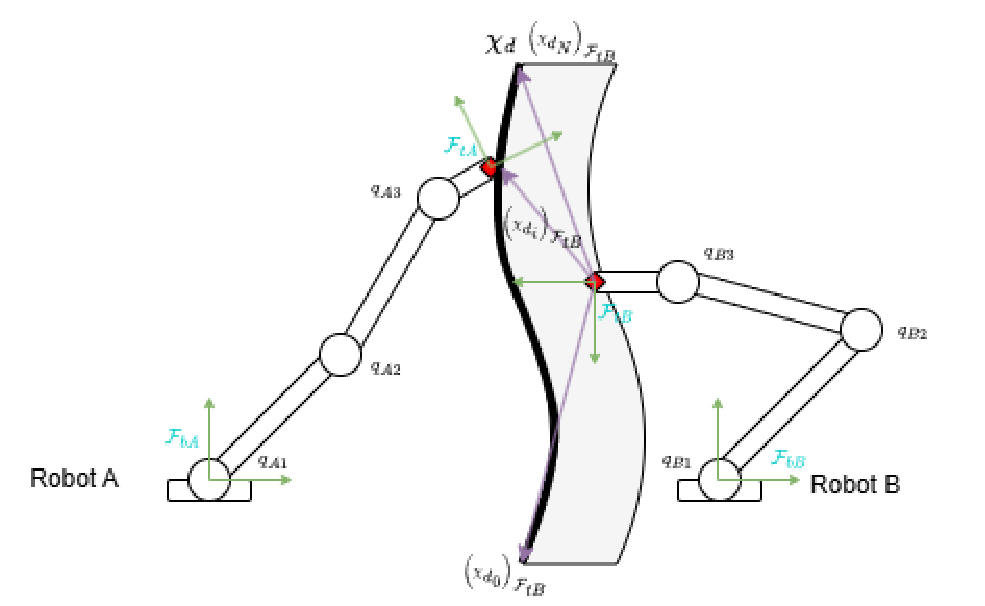}
     \caption{Dual Arm 3R planar setup. Robot B is holding the part while Robot A traces the curve with its TCP. Both robots move while coordinating to speed up the process.}
     \label{fig:setup}
 \end{figure}

% \begin{figure}
%     \centering
%     \includegraphics[width=4cm]{figures2/robot.drawio_1.eps}
%     \caption{Reference task: A 3R Planar Robot manipulator to trace the 2D leading edge of a fanblade for a material deposit application. The free variable for this particular simulated scenario is the rotation along the curve edge.}
%     \label{fig:robot}
% \end{figure}

    The forward kinematics mapping $k_\ell(q)$ for a 3R planar robot manipulator is given as follows
    \begin{equation}\label{eqn_forward_kinematics_experiment}
        \left[\begin{array}{c}
             x  \\
             y  \\
             \phi
        \end{array}\right] = \left[\begin{array}{c}
             a_1c_1 + a_2c_{12}+a_3c_{123}  \\
             a_1s_{1} + a_2s_{12}+a_3s_{123}  \\
             q_{\ell_1} + q_{\ell_2} + q_{\ell_3}
        \end{array}\right],
    \end{equation}
    where $c_{ijk} = \cos(q_{\ell_i} + q_{\ell_j} + q_{\ell_k})$, $s_{ijk} = \sin(q_{\ell_i} + q_{\ell_j} + q_{\ell_k})$.

In the single-arm case, as only the curve position is considered, $\phi$ is the free variable, and the key constraints are the first two rows of $k(q)$ and $J(q)$. $\phi$ is initialized uniformly at random in $[0, 2\pi]$ and held constant along the trajectory. For five experiments, initial $\phi$ values are $\left[5.12 \; 0.00 \; 1.04 \; 0.56 \; 4.76\right]$. The second arm initially holds the piece at the center of path $s$ with $\phi = \pi$, constant. For each $\phi$, initial $q_A, q_B$ are computed via inverse kinematics $k_\ell^{-1}$ according to \eqref{eqn_forward_kinematics_experiment} and parameterized over $s$ using a 9th-degree polynomial basis. The relative base position $p_{bA,bB} = [4, 0]^T$ and rotation $R_{bA,bB} = I$ remain constant. In the single-arm case, only robot A is optimized while robot B stays fixed. Both single and dual-arm cases share the same initial $t_f$ and Cartesian error due to identical initial conditions.

 %Software. 

%In Section \ref{sec_numerical_ours} we present the results of our proposed approach for different initial conditions, and compare it with the single-arm optimization approach, with the same hyperparameters. For the single-arm case, we consider only the optimization of robot A, holding the spray, and keep robot B, holding the piece, still. 

%    \subsection{Comparison between Single Arm and Dual Arm optimization with Proposed Approach}\label{sec_numerical_ours}
%\begin{figure}
 %       	\centering
  %          \includegraphics[width=8.4cm]{./figuresCDC/tf.eps} %\\
   %         \hfill %\quad
%\,         
    %        \vspace{11mm}
     %       \includegraphics[width=8cm]{./figuresCDC/itf.eps}
	%      \caption{Final time and improvement comparisons between optimization with a single arm and dual arms. Optimization run for five different initial conditions and same hyperparameters. The improvement given by the extra redundancy is noticeable.}
	   %   \label{fig:sim1}
           % \end{figure}
%
\begin{figure}[]
            % \includegraphics[width=5.5cm]{./figuresCDC/tf.eps} %\\
            % \quad
            % \includegraphics[width=5.5cm]{./figuresCDC/ec.eps}
%\,         \includegraphics[width=7.5cm]{./figuresCDC/ec.eps} %\\
             %\quad
            %\vspace{-5cm}
            \centering
            \includegraphics[width=5.4cm]{./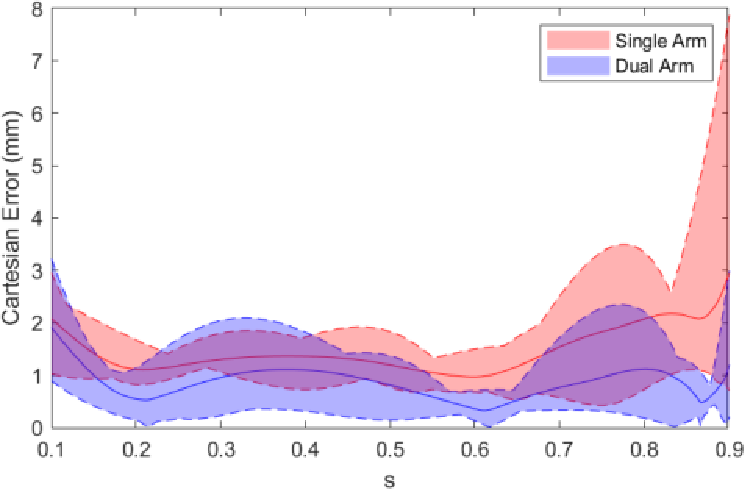}
            \caption{Cartesian errors for the whole trajectory, for single and dual arm solvers.  The improvement given by the extra redundancy is noticeable.}
	      \label{fig:sim1}
            
\end{figure}

\begin{table*}[h]
\centering
\begin{tabular}{|cccccccccc|}
\hline
\multicolumn{10}{|c|}{\textbf{Simulation Results}}                                                                                                                                                                                                                                                                                                                                                                                                                                                                                                                                                                                          \\ \hline
\multicolumn{1}{|c|}{}    & \multicolumn{3}{c|}{\textbf{Single Arm}}                                                                                                                                                                   & \multicolumn{3}{c|}{\textbf{Dual Arm - Proposed}}                                                                                                                                                          & \multicolumn{3}{c|}{\textbf{Dual Arm - CasADi}}                                                                                                                                       \\ \hline
\multicolumn{1}{|c|}{Run} & \multicolumn{1}{c|}{$t_f(s)$} & \multicolumn{1}{c|}{\begin{tabular}[c]{@{}c@{}}Running\\ Time (s)\end{tabular}} & \multicolumn{1}{c|}{\begin{tabular}[c]{@{}c@{}}Max Cartesian \\ Error (mm)\end{tabular}} & \multicolumn{1}{c|}{$t_f(s)$} & \multicolumn{1}{c|}{\begin{tabular}[c]{@{}c@{}}Running\\ Time (s)\end{tabular}} & \multicolumn{1}{c|}{\begin{tabular}[c]{@{}c@{}}Max Cartesian \\ Error (mm)\end{tabular}} & \multicolumn{1}{c|}{$t_f(s)$} & \multicolumn{1}{c|}{\begin{tabular}[c]{@{}c@{}}Running\\ Time (s)\end{tabular}} & \begin{tabular}[c]{@{}c@{}}Max Cartesian \\ Error (mm)\end{tabular} \\ \hline
\multicolumn{1}{|c|}{1}   & \multicolumn{1}{c|}{0.961}    & \multicolumn{1}{c|}{724}                                                        & \multicolumn{1}{c|}{7.9}                                                                 & \multicolumn{1}{c|}{0.821}    & \multicolumn{1}{c|}{1325}                                                       & \multicolumn{1}{c|}{4.9}                                                                 & \multicolumn{1}{c|}{86.2}     & \multicolumn{1}{c|}{4767}                                                       & 1485                                                                \\ \hline
\multicolumn{1}{|c|}{2}   & \multicolumn{1}{c|}{0.771}    & \multicolumn{1}{c|}{655}                                                        & \multicolumn{1}{c|}{10.1}                                                                & \multicolumn{1}{c|}{0.681}    & \multicolumn{1}{c|}{1319}                                                       & \multicolumn{1}{c|}{2.9}                                                                 & \multicolumn{1}{c|}{0.272}    & \multicolumn{1}{c|}{3858}                                                       & 1526                                                                \\ \hline
\multicolumn{1}{|c|}{3}   & \multicolumn{1}{c|}{0.837}    & \multicolumn{1}{c|}{767}                                                        & \multicolumn{1}{c|}{3.0}                                                                 & \multicolumn{1}{c|}{0.817}    & \multicolumn{1}{c|}{1628}                                                       & \multicolumn{1}{c|}{3.3}                                                                 & \multicolumn{1}{c|}{285}      & \multicolumn{1}{c|}{4656}                                                       & 1518                                                                \\ \hline
\multicolumn{1}{|c|}{4}   & \multicolumn{1}{c|}{0.804}    & \multicolumn{1}{c|}{838}                                                        & \multicolumn{1}{c|}{2.6}                                                                 & \multicolumn{1}{c|}{0.632}    & \multicolumn{1}{c|}{4824}                                                       & \multicolumn{1}{c|}{1.3}                                                                 & \multicolumn{1}{c|}{0.759}    & \multicolumn{1}{c|}{4713}                                                       & 1645                                                                \\ \hline
\multicolumn{1}{|c|}{5}   & \multicolumn{1}{c|}{0.952}    & \multicolumn{1}{c|}{797}                                                        & \multicolumn{1}{c|}{2.7}                                                                 & \multicolumn{1}{c|}{0.664}    & \multicolumn{1}{c|}{3290}                                                       & \multicolumn{1}{c|}{3.6}                                                                 & \multicolumn{1}{c|}{11.8}     & \multicolumn{1}{c|}{2919}                                                       & 420                                                                 \\ \hline
\multicolumn{1}{|c|}{Avg} & \multicolumn{1}{c|}{0.865}    & \multicolumn{1}{c|}{756}                                                        & \multicolumn{1}{c|}{5.3}                                                                 & \multicolumn{1}{c|}{0.723}    & \multicolumn{1}{c|}{2477}                                                       & \multicolumn{1}{c|}{3.3}                                                                 & \multicolumn{1}{c|}{76.95}    & \multicolumn{1}{c|}{4182}                                                       & 1318                                                                \\ \hline
\multicolumn{1}{|c|}{Std} & \multicolumn{1}{c|}{0.077}    & \multicolumn{1}{c|}{63}                                                         & \multicolumn{1}{c|}{3.1}                                                                 & \multicolumn{1}{c|}{0.080}    & \multicolumn{1}{c|}{1382}                                                       & \multicolumn{1}{c|}{1.4}                                                                 & \multicolumn{1}{c|}{109}      & \multicolumn{1}{c|}{714}                                                        & 452                                                                 \\ \hline
\end{tabular}
\caption{Comparison of single- and dual-arm methods using the proposed solver and CasADi across five initial conditions with identical hyperparameters. Table reports final time, runtime, and max Cartesian error for each method.
}

\end{table*}

%            \begin{figure}
 %       	\centering
        	%\includegraphics[width=7.5cm]{./figuresCDC/ec.eps} %\\
            %\hfill %\quad
%\,
            %\includegraphics[width=8.9cm]{./figuresCDC/avgce.eps}
	      %\caption{Comparison between Cartesian Errors obtained through single and dual arm improvement. It is of note that despite the better performance in the dual arm setup, it also has less average error than the single arm setup.}
	      %\label{fig:sim2}
            %\end{figure}
%

 Simulations were run on a Dell Precision 3650 Tower with an 11th Gen Intel i9-11900K, 128GB RAM, Windows 11 Pro, and Matlab R2021b. All experiments used 2000 outer-loop iterations of a primal-dual algorithm with $\eta_{\theta}=$ $10^-5$, $\eta_{d}=$ $0.5$, and path constraint tolerance $\epsilon = 10^-5$. We used the Matlab version of CasADi with IPOPT as the solver, limited to 3000 iterations.

            Optimization results in Table~1 show a clear performance gap between the single- and dual-arm cases. The solutions are local, with final times varying by initial condition. Average path velocities are $\dot{s} = 1.16 \pm 0.11$ (single arm) and $1.40 \pm 0.16$ (dual arm).

            The dual-arm approach also outperforms in Cartesian error, leveraging the additional redundancy via $\partial L/\partial \theta$, given by the larger nullspace of the relative Jacobian $J(q)$ compared to $J(q_A)$. From Table 1, the average max Cartesian error for the dual arm case is $3.3 \pm 1.4 mm$, which is below the $5mm$ stipulation, whereas the maximum Cartesian error of the other scenario is $5.3 \pm 3.1mm$, above the tolerance.  Figure \ref{fig:sim1} makes this difference clear, showing that the average Cartesian error of the dual arm scenario was below the average error of the single arm case for every point $s_i$ in the path. In general, the extra degrees of freedom allow the dual arm to obtain reduced times when the single arm case already starts to struggle to keep the error within bounds, as evidenced by the tail end of the curve.

           In contrast, CasADi failed to converge under the same initial conditions, despite taking nearly 60\% extra algorithm running time when compared to the proposed solver. Furthermore, it also failed when the inner problem was pre-solved, due to the non-differentiability of $V(\theta)$ in~\eqref{eq_solution_inner_cte}.

Overall, these results confirm the advantages of the proposed bi-level dual-arm optimization approach.

%%%%%%%%%%%%%%%%%%%%%%%%%%%%%%%%%%%%%%%%%%%%%%%%%%%%%%%%%%%%%%%%%%%%%%%%%%%%%%%

\section{Conclusions}\label{sec_conclusion}

This work addresses joint motion optimization for a redundant dual-arm robot tracking a relative Cartesian path at constant speed. We efficiently compute local solutions by reformulating it as a bi-level problem, with a convex, closed-form low-level subproblem that is independent of both the path and manipulators. By defining Cartesian error in the TCP frame, we model the system as a single kinematic chain, decoupling the desired path from joint configurations. Using the low-level solution, we compute subgradients for the high-level problem. The approach was validated in a cold spraying-inspired task and compared to a single-arm baseline and CasADi.

%%%%%%%%%%%%%%%%%%%%%%%%%%%%%%%%%%%%%%%%%%%%%%%%%%%%%%%%%%%%%%%%%%%%%%%%%%%%%%%%

%\addtolength{\textheight}{-0.5cm}   % This command serves to balance the column lengths
                                  % on the last page of the document manually. It shortens
                                  % the textheight of the last page by a suitable amount.
                                  % This command does not take effect until the next page
                                  % so it should come on the page before the last. Make
                                  % sure that you do not shorten the textheight too much.

%%%%%%%%%%%%%%%%%%%%%%%%%%%%%%%%%%%%%%%%%%%%%%%%%%%%%%%%%%%%%%%%%%%%%%%%%%%%%%%%

%\section*{References}
%\vspace{-15pt}
\bibliographystyle{IEEEtran}
\bibliography{cdc2023bib}             % bib file to produce the bibliography

\end{document}